\newcommand{\reals}{\mathbb{R}}
\newcommand{\Exp}{\mathbb{E}}
\newcommand{\argmin}[1]{\underset{#1}{\mathrm{argmin}}}
\newcommand{\argmax}[1]{\underset{#1}{\mathrm{argmax}}}
\newcommand{\dt}{\displaystyle}
\newcommand{\boldf}{\mathbf{f}}
\newcommand{\by}{\mathbf{y}}
\newcommand{\bp}{\mathbf{p}}
\newcommand{\Ocal}{\mathcal{O}}
\newcommand{\Fcal}{\mathcal{F}}
\newcommand{\Rcal}{\mathcal{R}}
\newcommand{\Pcal}{\mathcal{P}}
\newcommand{\Vcal}{\mathcal{V}}
\newcommand{\Wcal}{\mathcal{W}}
\newcommand{\Ycal}{\mathcal{Y}}
\newcommand{\abs}{\mathrm{abs}}
\newcommand{\secref}[1]{Sec.~\ref{#1}}
\newcommand{\lemref}[1]{Lemma~\ref{#1}}
\newcommand{\thmref}[1]{Thm.~\ref{#1}}
\newcommand{\algref}[1]{Algorithm~\ref{#1}}
\begin{document}

\title*{Efficient Transductive Online Learning via Randomized Rounding}
\author{Nicol\`{o} Cesa-Bianchi and Ohad Shamir}
\institute{Nicol\`{o} Cesa-Bianchi \at DI, Universit\`{a} degli Studi di Milano, Italy, \email{nicolo.cesa-bianchi@unimi.it}
\and Ohad Shamir \at Microsoft Research, USA, \email{ohadsh@microsoft.com}}

\maketitle

\abstract{Most traditional online learning algorithms are based on variants of mirror descent or follow-the-leader. In this paper, we present an online algorithm based on a completely different approach, tailored for transductive settings, which combines ``random playout'' and randomized rounding of loss subgradients. As an application of our approach, we present the first computationally efficient online algorithm for collaborative filtering with trace-norm constrained matrices. As a second application, we solve an open question linking batch learning and transductive online learning.}

\section{Introduction}
Online learning algorithms, which have received much attention in recent years, enjoy an attractive combination of computational efficiency, lack of distributional assumptions, and strong theoretical guarantees. Informally speaking, online learning is framed as a sequential game between a \emph{learner}, who provides predictions, and an all-powerful \emph{adversary}, who chooses the outcomes on which the learner's predictions are tested. The learner's goal is to attain low regret ---that is, low excess loss--- with respect to a comparison class of experts or predictors (see \secref{sec:basic} for a more precise statement). Using standard online-to-batch techniques (e.g. \cite{CesCoGe04}), one can convert online learning methods into simple and effective batch learning algorithms in a stochastic setting, where training and test examples are sampled from a distribution.

In this work, we focus on transductive online learning, where the predictions of the experts/predictors can all be computed in advance. For example, consider the case where a sequence of unlabeled instances $\{x_t\}$ are given, and the learner needs to predict the corresponding labels $\{y_t\}$ which are sequentially chosen and revealed by the adversary. Thus, for a given fixed predictor $h$, we can already compute its predictions $\{h(x_t)\}$ beforehand. This is a natural online analogue of the transductive learning framework introduced by Vapnik in a statistical batch setting \cite{Vap98}, where the test instances on which one needs to predict are known in advance.

Despite the effectiveness of online learning methods, it is probably fair to say that at their core, most of them are based on the same small set of fundamental techniques, in particular mirror descent and regularized follow-the-leader (see for instance \cite{Hazan11,SS12}). In this work we revisit, and significantly extend, an algorithm which uses a completely different approach. This algorithm, known as the \emph{Minimax Forecaster}, was introduced in~\cite{CesabianchiFrHeHaScWa97,Chung94} for the setting of prediction with static experts. The Forecaster computes minimax predictions in the case of a fixed horizon, binary outcomes, and absolute loss. Although the original version is computationally expensive, it can easily be made efficient through randomization.

We extend the analysis of~\cite{CesabianchiFrHeHaScWa97} to the case of non-binary outcomes, unknown horizons, and arbitrary convex and Lipschitz loss functions. The new algorithm is based on a combination of ``random playout'' and randomized rounding, which assigns random binary labels to future unseen instances, in a way depending on the loss subgradients. Our resulting \emph{Randomized Rounding ($R^2$) Forecaster} has a parameter trading off regret performance and computational complexity, and runs in polynomial time. The idea of ``random playout'', in the context of online learning, has also been used in~\cite{AbWa10,KakadeKalai05}, but we apply this idea in a different way.

Interestingly, our work, which focuses on online learning, has close links to methods and concepts from statistical learning, and thus can be seen as bridging between the two fields. For example, the $R^2$ Forecaster uses empirical risk minimization ---a standard statistical learning method--- as a subroutine. Moreover, the regret of the $R^2$ Forecaster is determined by the Rademacher complexity of the comparison class, which is a measure of the generalization performance of the class in a statistical setting. The connection between online learnability and Rademacher complexity has also been explored in~\cite{RaSriTe10,AbBaRaTe09}. Recently, \cite{RaShaSri12} provided a significant generalization of these ideas, implying new algorithms and extending in a sense the work presented here.

As an application of our results, we describe how the $R^2$ Forecaster can be used to design the first efficient online learning algorithm for collaborative filtering with trace-norm constrained matrices. While this is a well-known setting, a straightforward application of standard online learning approaches, such as mirror descent, appear to give only trivial performance guarantees. Moreover, our regret bound matches the best known sample complexity bound in the batch distribution-free setting~\cite{ShalSham11}.

As a different application, we consider general reductions between batch learning and transductive online learning. The relationship between these two settings was analyzed in \cite{KakadeKalai05}, in the context of binary prediction with respect to classes of bounded VC dimension. Their main result was that efficient learning in a statistical setting implies efficient learning in the transductive online setting, but at an inferior rate of $T^{3/4}$ (where $T$ is the number of rounds). The main open question posed by that paper is whether a better rate can be obtained. Using the $R^2$ Forecaster, we improve on those results, and provide an efficient algorithm with the optimal $\sqrt{T}$ rate, for a wide class of losses. This shows that efficient batch learning not only implies efficient transductive online learning (the main thesis of \cite{KakadeKalai05}), but also that the same rates can be obtained, and for possibly non-binary prediction problems as well.

We emphasize that the $R^2$ Forecaster requires computing many empirical risk minimizers (ERM's) at each round, which might be prohibitive in practice. Thus, while it does run in polynomial time whenever an ERM can be efficiently computed, we make no claim that it is a practical algorithm. Nevertheless, it seems to be a useful tool in showing that \emph{efficient} online learnability is possible in various settings, often working in cases where more standard techniques appear to fail. Moreover, we hope the techniques we employ might prove useful in deriving practical online algorithms in other contexts.

\section{The Minimax Forecaster}
\label{sec:basic}

We start by formally introducing our online learning setting, known as prediction with expert advice (see \cite{CesaBianchiLu06}). The game is played between a forecaster and an adversary, and is specified by an outcome space $\Ycal$, a prediction space $\Pcal$, a nonnegative loss function $\ell : \Pcal\times\Ycal\to\reals$, which measures the discrepancy between the forecaster's prediction and the outcome, and an expert class $\Fcal$. Here we focus on classes $\Fcal$ of \textsl{static experts}, whose prediction at each round $t$ do not depend on the outcome in previous rounds. Therefore, we think of each $\boldf\in \Fcal$ simply as a sequence $\boldf = (f_1,f_2,\dots)$ where each $f_t\in\Pcal$. At each step $t=1,2,\dots$ of the game, the forecaster outputs a prediction $p_t\in\Pcal$ and simultaneously the adversary reveals an outcome $y_t\in\Ycal$. The forecaster's goal is to predict the outcome sequence almost as well as the best expert in the class $\Fcal$, irrespective of the outcome sequence $\by = (y_1,y_2,\dots)$. The performance of a forecasting strategy $A$ is measured by the worst-case regret
\begin{equation}
\label{eq:regret}
    \Vcal_T(A,\Fcal)
=
    \sup_{\by\in\Ycal^T} \left(\sum_{t=1}^{T}\ell(p_t,y_t) - \inf_{\boldf\in\Fcal}\sum_{t=1}^{T}\ell(f_t,y_t)\right)
\end{equation}
viewed as a function of the horizon (number of rounds) $T$.

Consider now the special case where the horizon $T$ is fixed and known in advance, the outcome space is $\Ycal=\{-1,+1\}$, the prediction space is $\Pcal=[-1,+1]$, and the loss is the absolute loss $\ell(p,y) = |p-y|$. To simplify notation, let
$L(\boldf,\by) = \sum_{t=1}^{T}|f_t-y_t|$. We will denote the regret in this special case as $\Vcal_T^{\abs}(A,\Fcal)$.

The Minimax Forecaster ---which is based on work presented in~\cite{CesabianchiFrHeHaScWa97} and~\cite{Chung94}, see also \cite{CesaBianchiLu06} for an exposition--- is derived by an explicit analysis of the minimax regret $\inf_A\Vcal_T^{\abs}(A,\Fcal)$,
where the infimum is over all forecasters $A$ producing at round $t$ a prediction $p_t$ as a function of $p_1,y_1,\ldots p_{t-1},y_{t-1}$.
For general online learning problems, the analysis of this quantity is intractable. However, for the specific setting we focus on (absolute loss and binary outcomes), one can get both an explicit expression for the minimax regret, as well as an explicit algorithm, provided $\inf_{\boldf\in \Fcal}\sum_{t=1}^{T}\ell(f_t,y_t)$ can be efficiently computed for any sequence $y_1,\ldots,y_T$. This procedure is akin to performing empirical risk minimization (ERM) in statistical learning. A full development of the analysis is out of scope, but is outlined in \secref{app:derivation}. In a nutshell, the idea is to begin by calculating the optimal prediction in the last round $T$, and then work backwards, calculating the optimal prediction at round $T-1$, $T-2$ etc. Remarkably, the value of $\inf_A\Vcal_T^{\abs}(A,\Fcal)$ is \emph{exactly} the Rademacher complexity $\Rcal_T(\Fcal)$ of the class $\Fcal$, which is known to play a crucial role in understanding the sample complexity in statistical learning~\cite{BartlettMe01}. In this paper, we define it as:
\begin{equation}\label{eq:rademachercomp}
    \Rcal_T(\Fcal) = \Exp\left[\sup_{\boldf\in\Fcal}\sum_{t=1}^{T}\sigma_t f_t\right]
\end{equation}
where $\sigma_1,\ldots,\sigma_T$ are i.i.d.\ Rademacher random variables, taking values $-1,+1$ with equal probability. When $\Rcal_T(\Fcal)=o(T)$, we get a minimax regret $\inf_A\Vcal_T^{\abs}(A,\Fcal)=o(T)$ which implies a vanishing per-round regret.

In terms of an explicit algorithm, the optimal prediction $p_t$ at round $t$ is given by a complicated-looking recursive expression, involving exponentially many terms. Indeed, for general online learning problems, this is the most one seems able to hope for. However, an apparently little-known fact is that when one deals with a class $\Fcal$ of fixed binary sequences as discussed above, then one can write the optimal prediction $p_t$ in a much simpler way. Letting $Y_1,\ldots,Y_T$ be i.i.d.\ Rademacher random variables, the optimal prediction at round $t$ can be written as
\begin{equation}
\label{eq:p_t}
p_t ~=~ \Exp\left[\inf_{\boldf\in\Fcal}L\left(\boldf,y_1\cdots y_{t-1}\,(-1)\,Y_{t+1}\cdots Y_T\right)-\inf_{\boldf\in\Fcal}L\left(\boldf,y_1\cdots y_{t-1}\,1\,Y_{t+1}\cdots Y_T\right)\right].
\end{equation}
In words, the prediction is simply the expected difference between the minimal cumulative loss over $\Fcal$, when the adversary plays $-1$ at round $t$ and random values afterwards, and the minimal cumulative loss over $\Fcal$, when the adversary plays $+1$ at round $t$, and the same random values afterwards. Again, we refer the reader to \secref{app:derivation} for how this is derived. We denote this optimal strategy (for absolute loss and binary outcomes) as the Minimax Forecaster (\textsc{mf}).
\begin{algorithm}[H]
\caption{Minimax Forecaster (\textsc{mf})}
\begin{algorithmic}
\FOR{$t=1$ to $T$}
    \STATE Predict $p_t$ as defined in \eqref{eq:p_t}
    \STATE Receive outcome $y_t$ and suffer loss $|p_t-y_t|$
\ENDFOR
\end{algorithmic}
\label{alg:minimax}
\end{algorithm}
The relevant guarantee for \textsc{mf} is summarized in the following theorem.
\begin{theorem}\label{thm:basic}
For any class $\Fcal\subseteq [-1,+1]^T$ of static experts,
the regret of the Minimax Forecaster (\algref{alg:minimax}) satisfies $\Vcal_T^{\abs}(\textsc{mf},\Fcal) = \Rcal_T(\Fcal)$.
\end{theorem}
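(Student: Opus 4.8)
The plan is to compute the minimax value of the game outright, by backward induction over the $T$ rounds, and then to observe that the Minimax Forecaster plays, at each round, the argument of the relevant stage-wise minimization — so that its worst-case regret \emph{equals} that value, which we finally identify with $\Rcal_T(\Fcal)$.

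First I would introduce the value-to-go. For a partial outcome sequence $y_1,\dots,y_t$, let $\Phi_t(y_1,\dots,y_t)$ be the minimax regret over rounds $t+1,\dots,T$ with the best-expert benchmark evaluated on the full sequence: $\Phi_T(y_1,\dots,y_T) = -\inf_{\boldf\in\Fcal}L(\boldf,\by)$ and, for $t<T$,
\[
\Phi_t(y_{1:t}) \;=\; \inf_{p\in[-1,+1]}\ \max_{y\in\{-1,+1\}}\Bigl(|p-y| + \Phi_{t+1}(y_{1:t},y)\Bigr).
\]
A routine unrolling shows that for a forecaster which at round $t$ plays the argument of the inner $\inf$ given $y_{1:t-1}$ — i.e.\ exactly \textsc{mf} — one has $\sup_{\by}\bigl(\sum_t|p_t-y_t| - \inf_\boldf L(\boldf,\by)\bigr) = \Phi_0()$, so $\Vcal_T^{\abs}(\textsc{mf},\Fcal)=\Phi_0()$. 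It remains to evaluate $\Phi_0()$.

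The crucial simplification is the single-round problem. For $p\in[-1,+1]$ we have $|p-1|=1-p$ and $|p-(-1)|=1+p$, so the inner problem is $\inf_{p\in[-1,1]}\max\{(1-p)+g(1),\ (1+p)+g(-1)\}$ with $g(y)=\Phi_{t+1}(y_{1:t},y)$: the maximum of one decreasing and one increasing affine function of $p$, minimized at the crossover $p^\star=\tfrac12\bigl(g(1)-g(-1)\bigr)$ with value $1+\tfrac12\bigl(g(1)+g(-1)\bigr)$, \emph{provided} $p^\star\in[-1,1]$. Granting that, I would then prove by backward induction on $t$ the closed form
\[
\Phi_t(y_{1:t}) \;=\; (T-t) \;+\; \E_{Y_{t+1},\dots,Y_T}\!\Bigl[-\inf_{\boldf\in\Fcal}L\bigl(\boldf,\,y_1\cdots y_t\,Y_{t+1}\cdots Y_T\bigr)\Bigr],
\]
the induction step being exactly the averaging $\tfrac12\bigl(g(1)+g(-1)\bigr)$ above, which introduces one fresh Rademacher coordinate $Y_{t+1}$; the same computation simultaneously exhibits $p^\star$ in the expected-difference-of-best-losses form prescribed by \eqref{eq:p_t}.

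The one point that needs genuine care — and which I expect to be the only real work — is verifying that the stage-wise minimizer $p^\star$ does lie in $[-1,1]$, i.e.\ $\bigl|\Phi_{t+1}(y_{1:t},1)-\Phi_{t+1}(y_{1:t},-1)\bigr|\le 2$. This follows from the inductive closed form together with the fact that flipping a single outcome from $-1$ to $+1$ changes $L(\boldf,\cdot)$ by $|f_{t+1}-1|-|f_{t+1}+1| = -2f_{t+1}\in[-2,2]$ for every $\boldf$, so the infimum over $\Fcal$, and hence its expectation over the remaining $Y$'s, moves by at most $2$ as well (using $|\inf_x\phi - \inf_x\psi|\le\sup_x|\phi-\psi|$). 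Finally, evaluating the closed form at $t=0$ and using $|f_t-y_t| = 1 - y_t f_t$ for $y_t\in\{-1,+1\}$, $f_t\in[-1,+1]$, so that $\inf_\boldf L(\boldf,\by) = T - \sup_\boldf\sum_t y_t f_t$, gives
\[
\Phi_0() \;=\; T + \E_{Y_1,\dots,Y_T}\!\Bigl[-T + \sup_{\boldf\in\Fcal}\sum_{t=1}^T Y_t f_t\Bigr] \;=\; \E\Bigl[\sup_{\boldf\in\Fcal}\sum_{t=1}^T Y_t f_t\Bigr] \;=\; \Rcal_T(\Fcal),
\]
since the $Y_t$ are i.i.d.\ Rademacher. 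Combining, $\Vcal_T^{\abs}(\textsc{mf},\Fcal)=\Phi_0()=\Rcal_T(\Fcal)$, as claimed. (Attainment of the infima over $\Fcal$ is never used; the $\sup$ over $\by$ is over a finite set.)
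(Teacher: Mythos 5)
Your proof is correct and follows essentially the same route as the paper's own derivation in Appendix~\ref{app:derivation}: a backward induction that computes the minimax value in closed form, with the two key steps being the recursion for the value-to-go and the verification that the stage-wise minimizer stays in the prediction range; you merely work directly in the $\{-1,+1\}$ parametrization (proving the range condition via the one-flip argument $|f-1|-|f+1|=-2f\in[-2,2]$ and $|\inf\phi-\inf\psi|\le\sup|\phi-\psi|$) instead of deriving everything for $\{0,1\}$ outcomes and then rescaling, which is a cosmetic difference. One substantive remark: the minimizer you obtain is $p^\star=\tfrac12\bigl(g(1)-g(-1)\bigr)=\tfrac12\,\E\bigl[\inf_{\boldf\in\Fcal}L(\boldf,y_1\cdots y_{t-1}(-1)Y_{t+1}\cdots Y_T)-\inf_{\boldf\in\Fcal}L(\boldf,y_1\cdots y_{t-1}1\,Y_{t+1}\cdots Y_T)\bigr]$, which is \emph{half} the expression in \eqref{eq:p_t}; your version is the right one (the expectation in \eqref{eq:p_t} can reach $\pm 2$, outside $\Pcal=[-1,+1]$ --- e.g.\ $T=1$ and $\Fcal=\{(1)\}$ gives the value $2$), so \eqref{eq:p_t} is missing a factor of $\tfrac12$ and your claim that your $p^\star$ coincides with \eqref{eq:p_t} as written should be adjusted to say so.
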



The Minimax Forecaster described above is not computationally efficient, as the computation of $p_t$ requires averaging over exponentially many ERM's. However, by a martingale argument, it is not hard to show that it is in fact sufficient to compute only two ERM's per round.

\begin{algorithm}
\caption{Minimax Forecaster with efficient implementation (\textsc{mf*})}
\begin{algorithmic}
\FOR{$t=1$ to $T$}
    \STATE For $i=t+1,\ldots,T$, let $Y_i$ be a Rademacher random variable
    \STATE Let $p_t := \inf_{\boldf\in\Fcal}L\left(\boldf,y_1\dots y_{t-1}\,(-1)\,Y_{t+1}\dots Y_T\right)-\inf_{\boldf\in\Fcal}L\left(\boldf,y_1\dots y_{t-1}\,1\,Y_{t+1}\dots Y_T\right)$
    \STATE Predict $p_t$, receive outcome $y_t$ and suffer loss $|p_t-y_t|$
\ENDFOR
\end{algorithmic}
\label{alg:minimaxefficient}
\end{algorithm}

\begin{theorem}\label{thm:basicefficient}
For any class $\Fcal\subseteq [-1,+1]^T$ of static experts,
the regret of the randomized forecasting strategy \textsc{mf*} (\algref{alg:minimaxefficient}) satisfies
\[
\Vcal_T^{\abs}(\textsc{mf*},\Fcal) \le \Rcal_T(\Fcal)+\sqrt{2T\ln(1/\delta)}
\]
with probability at least $1-\delta$. Moreover, if the predictions $\bp = (p_1,\dots,p_T)$ are computed reusing the random values $Y_1,\ldots,Y_T$ computed at the first iteration of the algorithm, rather than drawing fresh values at each iteration, then it holds that
\[
    \Exp\left[L(\bp,\by) - \inf_{\boldf\in\Fcal} L(\boldf,\by)\right] \le \Rcal_T(\Fcal)
\qquad
    \text{for all $\by \in \{-1,+1\}^T$.}
\]
\end{theorem}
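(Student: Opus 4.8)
The plan is to derive both statements from one telescoping identity, the two cases differing only in how the per-round randomness is treated. Fix the outcome sequence $\by=(y_1,\dots,y_T)$. For the \emph{reused-randomness bound}, fix also the single draw $Y_1,\dots,Y_T$ that \textsc{mf*} reuses, and write $p_t$ for the resulting prediction at round $t$ (the difference of the two infima over $\Fcal$ in \algref{alg:minimaxefficient}, evaluated at $Y_{t+1},\dots,Y_T$). Define, for $t=0,1,\dots,T$, the ``optimistic'' minimal losses
\[
g_t \;=\; \inf_{\boldf\in\Fcal} L\bigl(\boldf,\,y_1\cdots y_t\,Y_{t+1}\cdots Y_T\bigr),
\]
so $g_T=\inf_{\boldf}L(\boldf,\by)$ is the comparator and $g_0=\inf_{\boldf}\sum_t|f_t-Y_t|$. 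Since $|f_t-Y_t|=1-f_tY_t$ for $f_t\in[-1,1]$ and $Y_t\in\{-1,+1\}$, we have $g_0=T-\sup_{\boldf}\sum_t f_tY_t$, hence $\E[g_0]=T-\Rcal_T(\Fcal)$ because the $Y_t$ are Rademacher. The heart of the argument is a per-round identity: for every $t$ and every fixed $Y_{t+1},\dots,Y_T$,
\[
\E_{Y_t}\!\Bigl[\,|p_t-y_t|+g_{t-1}-g_t\,\Bigr]=1.
\]
Indeed $p_t$ and $g_t$ do not depend on $Y_t$; $g_{t-1}$ equals one of the two infima defining $p_t$ according to $Y_t$ (each with probability $\tfrac{1}{2}$), while $g_t$ equals one of them according to $y_t$, and $p_t$ is chosen exactly so as to equalize the adversary's two options, leaving a residual cost of the constant $1$ at round $t$. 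Summing over $t$, telescoping $g_0-g_T=\sum_t(g_{t-1}-g_t)$, taking expectations, and inserting $\E[g_0]=T-\Rcal_T(\Fcal)$ and $g_T=\inf_{\boldf}L(\boldf,\by)$ gives $\E[L(\bp,\by)]=\Rcal_T(\Fcal)+\inf_{\boldf}L(\boldf,\by)$, which is the stated bound (in fact with equality).

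For the \emph{fresh-randomness bound} it is cleanest to compare \textsc{mf*} with the idealized \textsc{mf}. Let $p_t^{\mathrm{mf}}$ be the deterministic prediction \eqref{eq:p_t} of \textsc{mf} given $y_1,\dots,y_{t-1}$, and let $\hat p_t$ be the \textsc{mf*} prediction obtained from the fresh draw at round $t$; since that draw is independent of the past, $\E[\hat p_t\mid y_1,\dots,y_{t-1}]=p_t^{\mathrm{mf}}$. All predictions lie in $[-1,+1]$ and $y_t\in\{-1,+1\}$, so $|\hat p_t-y_t|-|p_t^{\mathrm{mf}}-y_t|=y_t(p_t^{\mathrm{mf}}-\hat p_t)$, and therefore
\[
L(\bp,\by)-\inf_{\boldf}L(\boldf,\by)=\Bigl(\sum_t|p_t^{\mathrm{mf}}-y_t|-\inf_{\boldf}L(\boldf,\by)\Bigr)+\sum_{t=1}^T y_t\bigl(p_t^{\mathrm{mf}}-\hat p_t\bigr).
\]
By \thmref{thm:basic} the first bracket is at most $\Rcal_T(\Fcal)$. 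The second sum is a martingale with conditional mean zero, and, conditioned on the past, each increment $y_t(p_t^{\mathrm{mf}}-\hat p_t)$ lies in an interval of length $2$ because $y_t\hat p_t\in[-1,+1]$; hence the Azuma--Hoeffding inequality gives $\sum_t y_t(p_t^{\mathrm{mf}}-\hat p_t)\le\sqrt{2T\ln(1/\delta)}$ with probability at least $1-\delta$. Adding the two bounds yields the first claim.

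The delicate step is the per-round identity in the reused-randomness case: one must isolate, from the backward-induction analysis underlying \thmref{thm:basic}, the fact that a single minimax-optimal step turns the potential $g_{t-1}$ into $g_t$ at an expected cost of exactly $1$, and check that this stays valid when $g_{t-1}$, $g_t$ and $p_t$ are all built from the \emph{same} random tail $Y_{t+1},\dots,Y_T$, so that the randomness does not refresh between steps and the sum genuinely telescopes. The remaining ingredients --- the identity $\E[g_0]=T-\Rcal_T(\Fcal)$, the telescoping, and the martingale bound (where the sharp constant $\sqrt{2T\ln(1/\delta)}$ comes from the interval-width form of Azuma--Hoeffding rather than a crude $|\cdot|\le 2$ bound) --- are routine.
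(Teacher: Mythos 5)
Your proof of the first (high-probability) statement is essentially the paper's argument: you compare the fresh-draw prediction $\hat p_t$ to its conditional mean, which is the \textsc{mf} prediction, invoke \thmref{thm:basic} for the deterministic part, and control the fluctuation via Azuma--Hoeffding applied to the martingale differences $y_t(p_t^{\mathrm{mf}}-\hat p_t)$, each confined to an interval of length $2$. That is exactly the proof sketched in the paper, with the constant made explicit.

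Your proof of the second (in-expectation) statement takes a genuinely different route. The paper's proof is a one-liner: for $y_t\in\{-1,+1\}$ and $p_t\in[-1,+1]$ the absolute loss $|p_t-y_t|=1-p_ty_t$ is affine in $p_t$, so $\E\bigl[|p_t-y_t|\bigr]=\bigl|\E[p_t]-y_t\bigr|$; since $\E[p_t]$ (over the reused $Y$'s) is precisely the \textsc{mf} prediction, the expected cumulative loss of \textsc{mf*} equals the loss of \textsc{mf}, and \thmref{thm:basic} finishes. You instead re-derive the bound from scratch via the realized potentials $g_t$, a single-sample equalizer identity, and telescoping. This is correct and arguably more informative: it does not use \thmref{thm:basic} as a black box, and it exhibits the bound as an equality, $\E[L(\bp,\by)]=\Rcal_T(\Fcal)+\inf_{\boldf}L(\boldf,\by)$ for every $\by$, which is indeed what the equalizer strategy achieves. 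The ``delicate step'' you flag does check out, and in a stronger pointwise form: writing $I_-$ and $I_+$ for the two infima at round $t$ built from the shared tail $Y_{t+1},\dots,Y_T$, one has $g_t=\tfrac12(I_-+I_+)-\tfrac{y_t}{2}(I_--I_+)$ and $\E_{Y_t}[g_{t-1}]=\tfrac12(I_-+I_+)$, so with $p_t=\tfrac12(I_--I_+)$ the quantity $|p_t-y_t|-g_t$ equals $1-\E_{Y_t}[g_{t-1}]$ identically in $y_t$, and your telescoping goes through. One caveat worth recording: this identity forces $p_t$ to be \emph{half} the difference of the two infima (the value equalizing the adversary's two options), which is what the backward induction of Appendix A actually yields after rescaling from $\{0,1\}$ outcomes to $\{-1,+1\}$; with the unhalved difference displayed in \eqref{eq:p_t} and in \algref{alg:minimaxefficient} the identity picks up an extra $-\tfrac{y_t}{2}(I_--I_+)$ term, and the prediction can leave $[-1,+1]$ (take a singleton class). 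So your argument --- like the paper's own proof, which also needs $p_t\in[-1,+1]$ --- should be read as applying to the properly scaled equalizing prediction; the discrepancy lies in the displayed formula, not in your identity.
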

\begin{proof}[Proof sketch]
\smartqed
To prove
the second statement, note that $\bigl|\Exp[p_t]-y_t\bigr|=\Exp\bigl[|p_t-y_t|\bigr]$ for any fixed $y_t\in\{-1,+1\}$ and $p_t$ bounded in $[-1,+1]$, and use \thmref{thm:basic}.
To prove
the first statement, note that $|p_t-y_t|-\bigl|\Exp_{p_t}[p_t]-y_t\bigr|$ for $t=1,\dots,T$ is a martingale difference sequence with respect to $p_1,\ldots,p_T$, and apply Azuma's inequality.
\qed
\end{proof}
The second statement in the theorem bounds the regret only in expectation and is thus weaker than the first one. On the other hand, it might have algorithmic benefits. Indeed, if we reuse the same values for $Y_1,\ldots,Y_T$, then the computation of the infima over $\boldf$ in \textsc{mf*} are with respect to an outcome sequence which changes only at one point in each round. Depending on the specific learning problem, it might be easier to re-compute the infimum after changing a single point in the outcome sequence, as opposed to computing the infimum over a different outcome sequence in each round.

\section{The $R^2$ Forecaster}
\label{s:gen-minimax}
The Minimax Forecaster presented above is very specific to the absolute loss $\ell(f,y)=|f-y|$ and for binary outcomes $\Ycal=\{-1,+1\}$, which limits its applicability. We note that extending the forecaster to other losses or different outcome spaces is not trivial: indeed, the recursive unwinding of the minimax regret term, leading to an explicit expression and an explicit algorithm, does not work as-is for other cases. Nevertheless, we will now show how one can deal with general (convex, Lipschitz) loss functions and outcomes belonging to any real interval $[-b,b]$.

The algorithm we propose essentially uses the Minimax Forecaster as a subroutine, by feeding it with a carefully chosen sequence of binary values $z_t$, and using predictions $f_t$ which are scaled to lie in the interval $[-1,+1]$. The values of $z_t$ are based on a randomized rounding of values in $[-1,+1]$, which depend in turn on the loss subgradient. Thus, we denote the algorithm as the Randomized Rounding ($R^2$) Forecaster.

To describe the algorithm, we introduce some notation. For any scalar $f \in [-b,b]$, define
$
\widetilde{f} = f/b
$
to be the scaled versions of $f$ into the range $[-1,+1]$. For vectors $\boldf$, define $\widetilde{\boldf}=(1/b)\boldf$. Also, we let $\partial_{p_t} \ell(p_t,y_t)$ denote any subgradient of the loss function $\ell$ with respect to the prediction $p_t$. As before, we define $L(\widetilde{\boldf},\by) = \sum_{t=1}^{T}|\tilde{f}_t-y_t|$. The pseudocode of the $R^2$ Forecaster is presented as \algref{alg:genminimax} below, and its regret guarantee is summarized in \thmref{thm:main}.

\begin{algorithm}
\caption{The $R^2$ Forecaster}
\begin{algorithmic}
\STATE \textbf{Input:} Upper bound $b$ on $|f_t|,|y_t|$ for all $t=1,\dots,T$ and $\boldf\in\Fcal$; upper bound $\rho$ on $\sup_{p,y\in[-b,b]}\bigl|\partial_{p}\ell(p,y)\bigr|$; precision parameter $\eta\ge\tfrac{1}{T}$.
\FOR{$t=1$ to $T$}
    \STATE $p_t:=0$
    \FOR{$j=1$ to $\eta\,T$}
    \STATE For $i=t,\ldots,T$, let $Y_i$ be a Rademacher random variable
    \STATE
    Draw ${\dt \Delta := \inf_{\boldf\in\Fcal} L\left(\widetilde{\boldf},z_1\dots z_{t-1}\,(-1)\,Y_{t+1}\dots Y_T\right) - \inf_{\boldf\in\Fcal} L\left(\widetilde{\boldf},z_1\dots z_{t-1}\,1\,Y_{t+1}\dots Y_T\right)}$
    \STATE Let $p_t:=p_t+\frac{b}{\eta\,T}\Delta$
    \ENDFOR
    \STATE Predict $p_t$
    \STATE Receive outcome $y_t$ and suffer loss $\ell(p_t,y_t)$
    \STATE Let $r_t := \frac{1}{2}\bigl(1-\frac{1}{\rho}\partial_{p_t} \ell(p_t,y_t)\bigr) \in [0,1]$
    \STATE Let $z_t := 1$ with probability $r_t$, and $z_t:=-1$ with probability $1-r_t$
\ENDFOR
\end{algorithmic}
\label{alg:genminimax}
\end{algorithm}

\begin{theorem}\label{thm:main}
Suppose $\ell$ is convex and $\rho$-Lipschitz in its first argument. For any $\Fcal\subseteq[-b,b]^{T}$,
with probability at least $1-\delta$ the regret of the $R^2$ Forecaster (\algref{alg:genminimax}) satisfies
\begin{equation}\label{eq:as1}
    \Vcal_T(R^2,\Fcal)
\le
    \rho\,\mathcal{R}_T(\Fcal) + \rho\,b\left(\sqrt{\frac{1}{\eta}}+2\right) \sqrt{2T\ln\left(\frac{2T}{\delta}\right)}
\end{equation}
\end{theorem}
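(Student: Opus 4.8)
The plan is to reduce the regret of the $R^2$ Forecaster against the convex, $\rho$-Lipschitz loss $\ell$ to the regret of the (absolute-loss, binary-outcome) Minimax Forecaster run on the randomly rounded sequence $z_1,\dots,z_T$, and then quote \thmref{thm:basic}. Throughout I would fix an outcome sequence $\by\in[-b,b]^T$; since the adversary is oblivious, both $\by$ and a near-minimizer $\boldf^\star\in\Fcal$ of $L(\cdot,\by)$ are deterministic, which matters for the martingale arguments. Writing $g_t:=\partial_{p_t}\ell(p_t,y_t)$ (so $|g_t|\le\rho$), the first step is the elementary convexity bound
\[
L(\bp,\by)-\inf_{\boldf\in\Fcal}L(\boldf,\by)
\;=\;\sum_{t=1}^{T}\bigl(\ell(p_t,y_t)-\ell(f^\star_t,y_t)\bigr)
\;\le\;\sum_{t=1}^{T}g_t\,(p_t-f^\star_t).
\]

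Second, I would bring in the randomized rounding. By definition of $r_t$, the bit $z_t$ has conditional mean $\E[z_t\mid\Hcal_t]=2r_t-1=-g_t/\rho$, where $\Hcal_t$ collects all information available just before $z_t$ is drawn (past bits $z_1,\dots,z_{t-1}$ and the fresh Rademacher variables used to form $p_t$); in particular $g_t=-\rho\,\E[z_t\mid\Hcal_t]$ and $p_t-f^\star_t$ is $\Hcal_t$-measurable. Setting $\xi_t:=z_t-\E[z_t\mid\Hcal_t]$, the sequence $\bigl(\xi_t(p_t-f^\star_t)\bigr)_t$ is a bounded martingale difference sequence with respect to $(\Hcal_t)_t$ (here we use that $\boldf^\star$ is non-random), so
\[
\sum_{t=1}^{T}g_t(p_t-f^\star_t)
\;=\;-\rho\sum_{t=1}^{T}z_t(p_t-f^\star_t)\;+\;\rho\sum_{t=1}^{T}\xi_t(p_t-f^\star_t),
\]
and the last sum is $O\!\bigl(\rho b\sqrt{T\ln(1/\delta)}\bigr)$ with high probability by Azuma's inequality.

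Third comes the heart of the argument: relating $-\rho\sum_t z_t(p_t-f^\star_t)$ to the Minimax Forecaster. Let $\widetilde q_t\in[-1,+1]$ be the \emph{exact} Minimax Forecaster prediction \eqref{eq:p_t} for the class $\widetilde\Fcal=\{\boldf/b:\boldf\in\Fcal\}$ on the sequence $z_1,\dots,z_T$, and put $q_t:=b\,\widetilde q_t$. The key observation is that $p_t$ is an average of $\eta T$ i.i.d.\ (given the past $\Gcal_t$) copies of the bracketed quantity in \eqref{eq:p_t}, so $\E[p_t\mid\Gcal_t]=q_t$. Using $z_t\in\{-1,+1\}$ together with $\widetilde q_t,\widetilde f^\star_t\in[-1,+1]$ --- so that $z_t\widetilde q_t=1-|\widetilde q_t-z_t|$ and $z_t\widetilde f^\star_t=1-|\widetilde f^\star_t-z_t|$ --- the additive ``$1$''s cancel and one obtains
\[
-\rho\sum_{t=1}^{T}z_t(p_t-f^\star_t)
\;=\;\rho b\Bigl(\sum_{t=1}^{T}|\widetilde q_t-z_t|-\sum_{t=1}^{T}|\widetilde f^\star_t-z_t|\Bigr)\;-\;\rho\sum_{t=1}^{T}z_t(p_t-q_t).
\]
Since the worst-case guarantee of \thmref{thm:basic} holds for \emph{every} outcome sequence --- in particular for the binary sequence $\bz$ generated online by the algorithm --- the parenthesized difference is at most $\Rcal_T(\widetilde\Fcal)=\tfrac1b\Rcal_T(\Fcal)$, so this term contributes at most $\rho\,\Rcal_T(\Fcal)$. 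For the remainder, $|z_t|=1$ gives $\bigl|\rho\sum_t z_t(p_t-q_t)\bigr|\le\rho\sum_t|p_t-q_t|$, and since each $p_t-q_t$ is $b$ times the deviation of an average of $\eta T$ i.i.d.-given-$\Gcal_t$ bounded variables from its mean, Hoeffding's inequality and a union bound over the $T$ rounds yield $\sum_t|p_t-q_t|=O\!\bigl(b\sqrt{T\ln(T/\delta)/\eta}\bigr)$ with high probability --- this is precisely where the $\sqrt{1/\eta}$ factor enters. Combining the three bounds, rescaling each high-probability event to confidence $1-\delta/2$ (equivalently absorbing an extra factor of $T$ inside the logarithm), and collecting constants gives \eqref{eq:as1}.

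I expect the main difficulty to be the probabilistic bookkeeping rather than any single estimate. One must carefully distinguish the two filtrations --- $\Gcal_t$, \emph{before} the $\eta T$ internal samples of round $t$ are drawn, with respect to which $p_t$ concentrates on the exact minimax prediction $q_t$; and $\Hcal_t\supseteq\Gcal_t$, \emph{after} $p_t$ (hence $r_t$) is determined but before $z_t$ is drawn, with respect to which $\xi_t(p_t-f^\star_t)$ is a martingale difference --- and one must justify applying \thmref{thm:basic} to a binary sequence $\bz$ that the algorithm itself produces adaptively. The two features that make the reduction exact up to the explicit error terms are the oblivious (hence non-random) comparator $\boldf^\star$, which legitimizes the Azuma step, and the identity $\E[p_t\mid\Gcal_t]=q_t$, which matches the $R^2$ Forecaster's prediction with the Minimax Forecaster's.
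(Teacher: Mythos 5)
Your proposal is correct and follows essentially the same route as the paper's proof: linearize via the subgradient, use the randomized-rounding identity $\E[z_t\mid\Hcal_t]=-g_t/\rho$ to pass to the absolute loss on the rounded bits, de-condition with Azuma, control $p_t-\E_{Y(t)}[p_t]$ with Hoeffding plus a union bound (yielding the $\sqrt{1/\eta}$ term), and invoke \thmref{thm:basic} pathwise on the adaptively generated sequence $\bz$. The only difference is cosmetic ordering --- you apply Azuma to the linear form before converting $z_t u=1-|u-z_t|$, whereas the paper converts to absolute losses first --- and your explicit filtration bookkeeping matches the paper's implicit one.
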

\begin{proof}
\smartqed
Let $Y(t)$ denote the set of Bernoulli random variables chosen at round $t$.
Let $\Exp_{z_t}$ denote expectation with respect to $z_t$,
conditioned on $z_{1},Y(1),\ldots,z_{t-1},Y(t-1)$ as well as $Y(t)$. Let $\Exp_{Y(t)}$ denote the expectation with respect to the random drawing of $Y(t)$, conditioned on $z_{1},Y(1),\ldots,z_{t-1},Y(t-1)$.

We will need two simple observations. First, by convexity of the loss function, we have that for any $p_t,f_t,y_t$,
$
\ell(p_t,y_t)-\ell(f_t,y_t)\leq (p_t-f_t)\,\partial_{p_t} \ell(p_t,y_t)
$.
Second, by definition of $r_t$ and $z_t$, we have that for any fixed $p_t,f_t$,
\begin{align*}
\frac{1}{\rho b}(p_t-f_t)&\partial_{p_t} \ell(p_t,y_t)
~=~ \frac{1}{b}(p_t-f_t)(1-2r_t)\\
&=~ \frac{1}{b}r_t(f_t-p_t)+\frac{1}{b}(1-r_t)(p_t-f_t)\\
&=~ r_t(\widetilde{f}_t-\widetilde{p}_t)+(1-r_t)(\widetilde{p}_t-\widetilde{f}_t)\\
&=~ r_t\left(\left(1-\widetilde{p}_t\right)
-\left(1-\widetilde{f}_t\right)\right)
+(1-r_t)\left(\left(\widetilde{p}_t+1\right)-\left(\widetilde{f}_t+1
\right)\right)\\
&=~ \Exp_{z_t}\left[\left|\widetilde{p}_t-z_t\right|
-\left|\widetilde{f}_t-z_t\right|\right]~.
\end{align*}
The last transition uses the fact that $\widetilde{p}_t,\widetilde{f}_t\in [-1,+1]$. By these two observations, we have
\begin{align}
    \sum_{t=1}^{T} \left(\ell(p_t,y_t) - \ell(f_t,y_t)\right)
\le
    \sum_{t=1}^{T}(p_t-f_t)\,\partial_{p_t} \ell(p_t,y_t)
=
    \rho\,b~\sum_{t=1}^{T}\Exp_{z_t}\left[ \left|\widetilde{p}_t-z_t\right|-\left|\widetilde{f}_t-z_t\right|\right]~.
\label{eq:2ob}
\end{align}
Now, note that
$|\widetilde{p}_t-z_t|-|\widetilde{f}_t-z_t| - \Exp_{z_t}\bigl[|\widetilde{p}_t-z_t|-|\widetilde{f}_t-z_t|\bigr]$ for $t=1,\dots,T$
is a martingale difference sequence: for any values of $z_1,Y(1),\ldots,z_{t-1},Y(t-1),Y(t)$ (which fixes $\widetilde{p}_t$), the conditional expectation of this expression over $z_t$ is zero. Using Azuma's inequality, we can upper bound \eqref{eq:2ob} with probability at least $1-\delta/2$ by
\begin{equation}\label{eq:2ob1}
\rho\,b~\sum_{t=1}^{T}\left(\left|\widetilde{p}_t-z_t\right|-|\widetilde{f}_t-z_t|\right)+\rho\,b\sqrt{8T\ln(2/\delta)}.
\end{equation}
The next step is to relate \eqref{eq:2ob1} to $\rho\,b\sum_{t=1}^{T}\bigl(\left|\Exp_{Y(t)}[\widetilde{p}_t]-z_t\right|-|\widetilde{f}_t-z_t|\bigr)$. It might be tempting to appeal to Azuma's inequality again. Unfortunately, there is no martingale difference sequence here, since $z_t$ is itself a random variable whose distribution is influenced by $Y(t)$. Thus, we need to turn to coarser methods. \eqref{eq:2ob1} can be upper bounded by
\begin{equation}
\rho\,b~\sum_{t=1}^{T}\left(\left|\Exp_{Y(t)}[\widetilde{p}_t]-z_t\right|-
|\widetilde{f}_t-z_t|\right)
+\rho\,b~\sum_{t=1}^{T}\left|\widetilde{p}_t-\Exp_{Y(t)}[\widetilde{p}_t]\right|
+\rho\,b\sqrt{8T\ln(2/\delta)}.\label{eq:2ob2}
\end{equation}
Recall that $\widetilde{p}_t$ is an average over $\eta T$ i.i.d.\ random variables, with expectation $\Exp_{Y(t)}[\widetilde{p}_t]$.
By Hoeffding's inequality, this implies that for any $t=1,\dots,T$,
with probability at least $1-\delta/2T$ over the choice of $Y(t)$,
$
\left|\widetilde{p}_t-\Exp_{Y(t)}[\widetilde{p}_t]\right| \leq \sqrt{2{\ln(2T/\delta)}\big/{(\eta T)}}.
$
By a union bound, it follows that with probability at least $1-\delta/2$ over the choice of $Y(1),\ldots,Y(T)$,
\[
\sum_{t=1}^{T}\left|\widetilde{p}_t-\Exp_{Y(t)}[\widetilde{p}_t]\right| \leq \sqrt{\frac{2T\ln(2T/\delta)}{\eta}}~.
\]
Combining this with \eqref{eq:2ob2}, we get that with probability at least $1-\delta$,
\begin{equation}\label{eq:2ob3}
\rho\,b\sum_{t=1}^{T}\left(\left|\Exp_{Y(t)}[\widetilde{p}_t]-z_t\right|-
|\widetilde{f}_t-z_t|\right)
+\rho\,b\sqrt{\frac{2T\ln(2T/\delta)}{\eta}}+\rho\,b\sqrt{8T\ln(2/\delta)}~.
\end{equation}
Finally, by definition of $\widetilde{p}_t = p_t/b$, we have that $\Exp_{Y(t)}[\widetilde{p}_t]$ equals
\[
    \Exp_{Y(t)}\!\!\left[\inf_{\boldf\in\Fcal}
L\left(\widetilde{\boldf},z_1\dots z_{t-1}\,(-1)\,Y_{t+1}\dots Y_T\right) - \inf_{\boldf\in\Fcal}L\left(\widetilde{\boldf},z_1\dots z_{t-1}\,1\,Y_{t+1}\dots Y_T\right)\right].
\]
\normalsize
This is exactly the Minimax Forecaster's prediction at round $t$, with respect to the sequence of outcomes $z_1,\ldots,z_{t-1}\in\{-1,+1\}$, and the class $\widetilde{\Fcal}:=\bigl\{\widetilde{\boldf}:\boldf\in \Fcal\bigr\}\subseteq [-1,1]^T$. Therefore, using \thmref{thm:basic}, we can upper bound \eqref{eq:2ob3} by
\[
\rho\,b\,\Rcal_T(\widetilde{\Fcal})+\rho\,b\sqrt{\frac{2T\ln(2T/\delta)}{\eta}}+\rho\,b\sqrt{8T\ln(2/\delta)}~.
\]
By definition of $\widetilde{\Fcal}$ and Rademacher complexity, it is straightforward to verify that $\Rcal_T(\widetilde{\Fcal})=\frac{1}{b}\Rcal_T(\Fcal)$. Using that to rewrite the bound, and slightly simplifying for readability, the result stated in the theorem follows.
\qed
\end{proof}
The computed prediction $p_t$ is an empirical approximation to
\[
b\,\Exp_{Y_{t+1},\ldots,Y_T}\left[\inf_{\boldf\in\Fcal} L\left(\widetilde{\boldf},z_1\dots z_{t-1}\,0\,Y_{t+1}\dots Y_T\right) - \inf_{\boldf\in\Fcal} L\left(\widetilde{\boldf},z_1\cdots z_{t-1}\,1\,Y_{t+1}\cdots Y_T\right)\right]
\]
by repeatedly drawing independent values to $Y_{t+1},\ldots,Y_T$ and averaging. The accuracy of the approximation is reflected in the precision parameter $\eta$. A larger value of $\eta$ improves the regret bound, but also increases the runtime of the algorithm. Thus, $\eta$ provides a trade-off between the computational complexity of the algorithm and its regret guarantee. We note that even when $\eta$ is taken to be a constant fraction, the resulting algorithm still runs in polynomial time $\Ocal(T^2c)$, where $c$ is the time to compute a single ERM. In subsequent results pertaining to this Forecaster, we will assume that $\eta$ is taken to be a constant fraction.

The $R^2$ forecaster, as presented so far, assumes that the horizon $T$ is known in advance. We now turn to describe how it can be readily extended to the case where it is unknown. The standard generic method to achieve this is known as the ``doubling'' trick (see \cite{CesaBianchiLu06}), and is based on guessing the value of $T$ (initially $T=1$), and running the algorithm with this guess. If the game did not end after $T$ rounds, the guess is doubled and the algorithm is restarted with this new value. If the actual horizon $T$ equals $2^0+2^1+2^2+\ldots+2^r$ for some integer $r$, then it is easy to show that our algorithm enjoys the same regret bound as before, plus a moderate multiplicative factor\footnote{Specifically, we divide the rounds into $r$ consecutive epochs, such that epoch $i$ consists of $2^i$ rounds, and use \thmref{thm:main} with confidence $\delta'=\delta/2^{i+1}$, and a union bound, to get a regret bound of $\Ocal(\mathcal{R}_{2^i}(\Fcal)+\sqrt{\left(i+\log(1/\delta)\right)2^i})$ over any epoch $i$. In the typical case where $\mathcal{R}_{T}(\Fcal) = \Ocal(\sqrt{T})$, summing over $i=1,\ldots,r$ where $r=\log_2(T+1)-1$ yields a total regret bound of order $\Ocal(\sqrt{\log(T/\delta)T})$. Up to log factors, this is the same bound as if $T$ were known in advance.}. The only case we need to worry about is when $T$ is not of this form, i.e., that the game ends in the middle of the algorithm's run. In that case, it is enough to ensure that the algorithm's regret bound, designed for horizon $T$, also bounds the regret after a smaller number $t<T$ of rounds. This can be shown to hold quite generically, given a very mild assumption on the loss function:
\begin{lemma}
\label{lem:T-uniform}
Consider a (possibly randomized) forecaster $A$ for a class $\Fcal$ whose regret after $T$ steps satisfies $\Vcal_T(A,\Fcal) \le G$ with probability at least $1-\delta > \tfrac{1}{2}$. Furthermore, suppose the loss function is such that ${\dt \inf_{p'\in\Pcal}\sup_{y\in\Ycal}\inf_{p\in\Pcal}\bigl(\ell(p,y)-\ell(p',y)\bigr) \geq 0 }$.
Then
\[
    \max_{t=1,\dots,T} \Vcal_t(A,\Fcal) \le G
\qquad
    \text{with probability at least $1-\delta$.}
\]
\end{lemma}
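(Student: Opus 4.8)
The plan is to show that if the regret bound $\Vcal_T(A,\Fcal)\le G$ fails at some intermediate horizon $t<T$, then an adversary can force it to fail at the full horizon $T$ as well, contradicting the hypothesis (which holds with probability $>\tfrac12$, hence cannot be vacuous). Concretely, I would argue by contraposition: suppose that with probability strictly greater than $\delta$ there is some $t\le T$ at which $\Vcal_t(A,\Fcal)>G$. I want to convert this into a failure event at exactly round $T$ that also has probability strictly greater than $\delta$, contradicting $\Vcal_T(A,\Fcal)\le G$ with probability $\ge 1-\delta$.

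The key device is the condition $\inf_{p'\in\Pcal}\sup_{y\in\Ycal}\inf_{p\in\Pcal}\bigl(\ell(p,y)-\ell(p',y)\bigr)\ge 0$, which says that the adversary has a way of continuing the game after round $t$ so that, for the remaining rounds $t+1,\dots,T$, every expert can be matched by the forecaster in per-round loss — i.e. there is a "filler" play $y_{t+1},\dots,y_T$ (and a corresponding forecaster response $p_s$) such that $\ell(p_s,y_s)-\inf_{f\in\Fcal}\ell(f_s,y_s)\le 0$ for each $s>t$. Wait — more carefully, the condition is about the adversary choosing $y$ to make $\inf_p(\ell(p,y)-\ell(p',y))\ge 0$ for the best fixed reference $p'$; so after the first $t$ rounds I let the adversary play this "neutral" outcome in all remaining rounds. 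Then the cumulative regret after $T$ rounds is at least the cumulative regret after $t$ rounds (the tail contributes a nonnegative amount to the regret, since the forecaster cannot beat even the single fixed prediction $p'$, let alone $\inf_{f\in\Fcal}$). Hence on the event that $\Vcal_t(A,\Fcal)>G$, this particular adversary achieves $\sum_{s=1}^T\ell(p_s,y_s)-\inf_{f\in\Fcal}\sum_{s=1}^T\ell(f_s,y_s)>G$, so $\Vcal_T(A,\Fcal)>G$ on that event.

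There is one subtlety to handle carefully, which I expect to be the main obstacle: the quantifier structure in the definition of $\Vcal_t$ versus the per-realization randomness of $A$. The regret $\Vcal_t(A,\Fcal)$ is a supremum over outcome sequences $\by\in\Ycal^t$, while $A$ is randomized, so "$\Vcal_t(A,\Fcal)>G$ with probability $>\delta$" must be read as: there is a prefix $\by_{1:t}$ and a positive-probability set of the forecaster's internal randomness on which the regret after $t$ rounds against that prefix exceeds $G$. I would fix such a bad prefix $\by_{1:t}$, extend it by the neutral outcomes $y_{t+1},\dots,y_T$ as above to get a full outcome sequence $\by\in\Ycal^T$, and observe that the forecaster's randomness only enters through $p_1,\dots,p_T$; conditioning on the bad event for the first $t$ rounds and using that the tail losses satisfy $\ell(p_s,y_s)\ge\ell(p',y_s)\ge\inf_{f}\ell(f_s,y_s)$ pathwise, the total regret against $\by$ exceeds $G$ on a set of probability $>\delta$. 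This contradicts the assumed high-probability bound at horizon $T$. Taking $t$ over $1,\dots,T$ and a union/maximum argument (or just noting the contradiction already rules out any such $t$) yields $\max_{t\le T}\Vcal_t(A,\Fcal)\le G$ with probability at least $1-\delta$. I would also remark that the requirement $1-\delta>\tfrac12$ is what guarantees the hypothesis event and any "bad" event of probability $>\delta$ must overlap — though in the contraposition form above this is used implicitly, and I would double-check whether it is genuinely needed or only for the cleaner "simultaneously for all $t$" phrasing.
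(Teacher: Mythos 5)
Your overall strategy is the paper's: argue by contradiction, take an adversary strategy and a round $t$ at which the cumulative regret exceeds $G$ with probability larger than $\delta$, and extend the play with ``neutral'' filler outcomes so that the regret cannot decrease by round $T$. The gap is in the one step that actually uses the hypothesis on the loss: \emph{which} reference prediction $p'$ the filler outcomes are neutral against, and against which comparator the total regret is decomposed. To lower-bound the regret at horizon $T$ by the regret at horizon $t$ you must upper-bound $\inf_{f\in\Fcal}\sum_{s=1}^{T}\ell(f_s,y_s)$ by plugging in a \emph{specific} expert, and the one that works is $f^*=\argmin{f\in\Fcal}\sum_{s=1}^{t}\ell(f_s,y_s)$, the minimizer on the realized prefix. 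This gives
\[
\sum_{s=1}^{T}\ell(p_s,y_s)-\inf_{f\in\Fcal}\sum_{s=1}^{T}\ell(f_s,y_s)
\;\ge\;
\left(\sum_{s=1}^{t}\ell(p_s,y_s)-\inf_{f\in\Fcal}\sum_{s=1}^{t}\ell(f_s,y_s)\right)
+\sum_{s=t+1}^{T}\bigl(\ell(p_s,y_s)-\ell(f^*_s,y_s)\bigr),
\]
so the filler outcome at round $s$ must be $y^*_s=\argmax{y\in\Ycal}\inf_{p\in\Pcal}\bigl(\ell(p,y)-\ell(f^*_s,y)\bigr)$, i.e.\ neutral against the reference $p'=f^*_s$, which varies from round to round and depends on the realized prefix. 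The hypothesis is an infimum over $p'$ that is nonnegative precisely so that it can be invoked for \emph{every} $p'$, in particular for each $f^*_s$.

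Your write-up instead fixes a single ``best'' reference $p'$ and justifies nonnegativity of the tail via the chain $\ell(p_s,y_s)\ge\ell(p',y_s)\ge\inf_{f\in\Fcal}\ell(f_s,y_s)$. The second inequality is unjustified (an arbitrary $p'$ need not beat the best expert pointwise), and in any case the per-round comparison with $\inf_{f}\ell(f_s,y_s)$ only shows that the \emph{tail-best} expert gains nothing on the tail; since $\inf_f\sum_{s=1}^{T}\ge\inf_f\sum_{s=1}^{t}+\inf_f\sum_{s=t+1}^{T}$, that direction yields an upper bound on the horizon-$T$ regret, not the lower bound you need. Concretely, with absolute loss and $\Pcal=\Ycal$ the neutral outcome for reference $p'$ is $y=p'$ itself; if the adversary plays a fixed $p'\neq f^*_s$ in the tail, then $f^*$ accrues tail loss $|f^*_s-p'|$ per round and the regret can shrink back below $G$, with no guarantee that another expert compensates. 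The remainder of your argument --- fixing a bad prefix, the conditional-probability bookkeeping for the randomized forecaster, and the observation that the contradiction already handles all $t$ simultaneously --- is sound and matches the paper.
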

Note that for the assumption on the loss to hold, a simple sufficient condition is that $\Pcal=\Ycal$ and $\ell(p,y) \ge \ell(y,y)$ for all $p,y\in\Pcal$.
\begin{proof}
\smartqed
The proof assumes that the infimum and supremum of certain functions over $\Ycal,\Fcal$ are attainable. If not, the proof can be easily adapted by finding attainable values which are $\epsilon$-close to the infimum or supremum, and then taking $\epsilon \rightarrow 0$.

For the purpose of contradiction, suppose there exists a strategy for the adversary and a round $r \le T$ such that at the end of round $r$, the forecaster suffers a regret $G' > G$ with probability larger than $\delta$. Consider the following modified strategy for the adversary: the adversary plays according to the aforementioned strategy until round $r$. It then computes
\[
f^* = \argmin{f\in\Fcal}\sum_{t=1}^{r}\ell(f_t,y_t)~.
\]
At all subsequent rounds $t=r+1,r+2,\ldots,T$, the adversary chooses
\[
    y^*_t = \argmax{y\in\Ycal}\inf_{p\in\Pcal}\bigl(\ell(p,y)-\ell(f^*_t,y)\bigr)~.
\]
By the assumption on the loss function,
\[
\ell(p_t,y^*_t)-\ell(f^*_t,y^*_t) \geq \inf_{p\in\Pcal} \bigl(\ell(p,y^*_t)-\ell(f^*_t,y^*_t)\bigr) = \sup_{y\in \Ycal}\inf_{p\in\Pcal} \bigl(\ell(p,y)-\ell(f^*_t,y)\bigr) \geq 0~.
\]
Thus, the regret over all $T$ rounds, with respect to $f^*$, is
\[
\sum_{t=1}^{r}\bigl(\ell(p_t,y_t)-\ell(f^*_t,y_t)\bigr)+\sum_{t=r+1}^{T}\bigl(\ell(p_t,y^*_t)-\ell(f^*_t,y^*_t)\bigr)
\geq \sum_{t=1}^{r}\ell(p_t,y_t)-\inf_{f\in\Fcal}\sum_{t=1}^{r}\ell(f_t,y_t)
\]
which is at least $G'$ with probability larger than $\delta$. On the other hand, we know that the learner's regret is at most most $G$ with probability at least $1-\delta$. Thus we have a contradiction and the proof is concluded.
\qed
\end{proof}
We end this section with a remark that plays an important role in what follows.
\begin{remark}
\label{rem:perm}
The predictions of our forecasting strategies do not depend on the ordering of the predictions of the experts in $\Fcal$. In other words, all the results proven so far also hold in a setting where the elements of $\Fcal$ are functions $f : \{1,\ldots,T\}\to\Pcal$, and the adversary has control on the permutation $\pi_1,\dots,\pi_T$ of $\{1,\ldots,T\}$ that is used to define the prediction $f(\pi_t)$ of expert $f$ at time $t$.\footnote{
Formally, at each step $t$: (1) the adversary chooses and reveals the next element $\pi_t$ of the permutation; (2) the forecaster chooses $p_t\in\Pcal$ and simultaneously the adversary chooses $y_t\in\Ycal$.
}
Also, \thmref{thm:basic} implies that the value of $\Vcal_T^{\abs}(\Fcal)$ remains unchanged irrespective of the permutation chosen by the adversary.
\end{remark}

\section{Application 1: Transductive Online Learning}
\label{s:trans}
The first application we consider is a rather straightforward one, in the context of transductive online learning \cite{BendKuMa97}. In this model, we have an arbitrary sequence of labeled examples $(x_1,y_1),\ldots,(x_T,y_T)$, where only the set $\{x_1,\ldots,x_T\}$ of unlabeled instances is known to the learner in advance. At each round $t$, the learner must provide a prediction $p_t$ for the label of $y_t$. The true label $y_t$ is then revealed, and the learner incurs a loss $\ell(p_t,y_t)$. The learner's goal is to minimize the transductive online regret $\sum_{t=1}^{T}\bigl(\ell(p_t,y_t)-\inf_{f\in\Fcal}\ell(f(x_t),y_t)\bigr)$ with respect to a fixed class of predictors $\Fcal$ of the form $\{x\mapsto f(x)\}$.

The work~\cite{KakadeKalai05} considers the binary classification case with zero-one loss. Their main result is that if a class $\Fcal$ of binary functions has bounded VC dimension $d$, and there exists an efficient algorithm to perform empirical risk minimization, then one can construct an efficient randomized algorithm for transductive online learning, whose regret is at most $\Ocal(T^{3/4}\sqrt{d\ln(T)})$ in expectation. The significance of this result is that efficient batch learning (via empirical risk minimization) implies efficient learning in the transductive online setting. This is an important result, as online learning can be computationally harder than batch learning - see, e.g., \cite{Blum94} for an example in the context of Boolean learning.

A major open question posed by~\cite{KakadeKalai05} was whether one can achieve the optimal rate $\Ocal(\sqrt{dT})$, matching the rate of a batch learning algorithm in the statistical setting. Using the $R^2$ Forecaster, we can easily achieve the above result, as well as similar results in a strictly more general setting. This shows that efficient batch learning not only implies efficient transductive online learning (the main thesis of \cite{KakadeKalai05}), but also that the same rates can be obtained, and for possibly non-binary prediction problems as well.
\begin{theorem}
Suppose we have a computationally efficient algorithm for empirical risk minimization (with respect to the zero-one loss) over a class $\Fcal$ of $\{0,1\}$-valued functions with VC dimension $d$. Then, in the transductive online model, the efficient randomized forecaster \textsc{mf*} achieves an expected regret of $\Ocal(\sqrt{dT})$ with respect to the zero-one loss. \\
Moreover, for an arbitrary class $\Fcal$ of $[-b,b]$-valued functions with Rademacher complexity $\Rcal_T(\Fcal)$, and any convex $\rho$-Lipschitz loss function, if there exists a computationally efficient algorithm for empirical risk minimization, then the $R^2$ Forecaster is computationally efficient and achieves, in the transductive online model, a regret of $\rho\Rcal_T(\Fcal)+\Ocal(\rho b \sqrt{T\ln(T/\delta)})$ with probability at least $1-\delta$.
\end{theorem}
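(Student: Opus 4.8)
The plan is to reduce the transductive online learning problem to the static-expert setting already handled by \textsc{mf*} and the $R^2$ Forecaster, exploiting Remark~\ref{rem:perm}. The key observation is that in the transductive model the learner knows the unlabeled instance set $\{x_1,\dots,x_T\}$ in advance, but not the order in which the labels arrive. Each predictor $f\in\Fcal$ therefore induces a fixed sequence of predictions $(f(x_1),\dots,f(x_T))$ on these instances, so the class $\Fcal$ becomes a class of static experts over a universe of $T$ points, and the adversary's choice of ordering is exactly the adversarial permutation $\pi_1,\dots,\pi_T$ described in Remark~\ref{rem:perm}. Thus the transductive online regret equals the static-expert regret against a permuting adversary, which by the remark is governed by the same bounds as in Theorems~\ref{thm:basicefficient} and~\ref{thm:main}.

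For the first (binary, zero-one loss) claim, I would instantiate $\Ycal=\{-1,+1\}$ (after the usual relabeling of $\{0,1\}$) and note that for binary outcomes and $[-1,+1]$-valued randomized predictions the expected zero-one loss equals the absolute loss $|p_t-y_t|$; hence \textsc{mf*} applies directly. Its expected-regret guarantee (second statement of Theorem~\ref{thm:basicefficient}) bounds the regret by $\Rcal_T(\Fcal)$, the Rademacher complexity of the induced expert class. It then remains to invoke the standard combinatorial bound relating Rademacher complexity to VC dimension: for a class of VC dimension $d$ over $T$ points, $\Rcal_T(\Fcal)=\Ocal(\sqrt{dT})$ (this follows from Massart's lemma together with the Sauer--Shelah bound, or equivalently from Dudley's entropy integral; it is exactly the quantity appearing in classical batch VC bounds). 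Efficiency follows because \textsc{mf*} requires only $\Ocal(T)$ ERM computations, each assumed efficient by hypothesis.

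For the second claim I would instead feed the induced static-expert class (now $[-b,b]$-valued) to the $R^2$ Forecaster and apply Theorem~\ref{thm:main} verbatim with $\eta$ a constant fraction: the regret is at most $\rho\,\Rcal_T(\Fcal)+\rho b\bigl(\sqrt{1/\eta}+2\bigr)\sqrt{2T\ln(2T/\delta)}=\rho\,\Rcal_T(\Fcal)+\Ocal\bigl(\rho b\sqrt{T\ln(T/\delta)}\bigr)$ with probability at least $1-\delta$. Computational efficiency is inherited from Theorem~\ref{thm:main}: the forecaster performs $\Ocal(T^2)$ ERM calls, each efficient by assumption, so the total runtime is polynomial.

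The only real subtlety — and the step I expect to require the most care — is making the reduction to the permuting-adversary static-expert model airtight: one must check that the information structure of the transductive game (instances known in advance, labels and their order revealed online) matches exactly the protocol in the footnote to Remark~\ref{rem:perm}, so that the forecaster's predictions, which are permutation-invariant, remain valid and the regret identities carry over. Everything else is either a direct citation of an earlier theorem in this paper or a textbook Rademacher-versus-VC estimate; I would state the latter as a lemma with a one-line pointer to Massart's finite-class lemma and Sauer--Shelah rather than reprove it.
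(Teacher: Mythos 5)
Your proposal follows essentially the same route as the paper: reduce the transductive game to the permuting-adversary static-expert setting of Remark~\ref{rem:perm} via $t\mapsto f(x_t)$, invoke Theorem~\ref{thm:basicefficient} for the binary/zero-one part and Theorem~\ref{thm:main} for the general part, and bound $\Rcal_T(\Fcal)$ by $\Ocal(\sqrt{dT})$ for a VC class. The only caveat is that Massart's lemma combined with Sauer--Shelah gives $\Ocal(\sqrt{dT\ln T})$ rather than $\Ocal(\sqrt{dT})$, so to obtain the stated rate you must use the chaining (Dudley entropy integral) argument, which is the tool the paper cites and which you also list as an alternative.
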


\begin{proof}
\smartqed
Since the set $\{x_1,\ldots,x_T\}$ of unlabeled examples is known, we reduce the online transductive model to prediction with expert advice in the setting of Remark~\ref{rem:perm}. This is done by mapping each function $f\in \Fcal$ to a function $f : \{1,\ldots,T\} \to \Pcal$ by $t\mapsto f(x_t)$, which is equivalent to an expert in the setting of Remarks~\ref{rem:perm}. When $\Fcal$ maps to $\{0,1\}$, and we care about the zero-one loss, we can use the forecaster \textsc{mf*} to compute randomized predictions and apply \thmref{thm:basicefficient} to bound the expected transductive online regret  with $\Rcal_T(\Fcal)$. For a class with VC dimension $d$, $\Rcal_T(\Fcal) \le \Ocal(\sqrt{dT})$ for some constant $c > 0$, using Dudley's chaining method \cite{Dudley84}, and this concludes the proof of the first part of the theorem. The second part is an immediate corollary of \thmref{thm:main}.
\qed
\end{proof}
We close this section by contrasting our results for online transductive learning with those of~\cite{BenDavidPS09} about standard online learning. If $\Fcal$ contains $\{0,1\}$-valued functions, then the optimal regret bound for online learning is order of $\sqrt{d'T}$, where $d'$ is the Littlestone dimension of $\Fcal$. Since the Littlestone dimension of a class is never smaller than its VC dimension, we conclude that online learning is a harder setting than online transductive learning.

\section{Application 2: Online Collaborative Filtering}
\label{s:ocofilt}

We now turn to discuss the application of our results in the context of collaborative filtering with trace-norm constrained matrices, presenting the first computationally efficient online algorithms for this problem.

In collaborative filtering, the learning problem is to predict entries of an unknown $m \times n$ matrix based on a subset of its observed entries. A common approach is norm regularization, where we seek a low-norm matrix which matches the observed entries as best as possible. The norm is often taken to be the trace-norm~\cite{Bach08,SalaMni07,SrebRenJaa04}, although other norms have also been considered, such as the max-norm~\cite{LeeReSaSerTrp10} and the weighted trace-norm~\cite{FoSaShaSre11,SalSre10}.

Previous theoretical treatments of this problem assumed a stochastic setting, where the observed entries are picked according to some underlying distribution (e.g., \cite{ShalSham11,SreSh05}). However, even when the guarantees are distribution-free, assuming a fixed distribution fails to capture important aspects of collaborative filtering in practice, such as non-stationarity~\cite{Koren09}. Thus, an online adversarial setting, where no distributional assumptions whatsoever are required, seems to be particularly well-suited to this problem domain.

In an online setting, at each round $t$ the adversary reveals an index pair $(i_t,j_t)$ and secretely chooses a value $y_t$ for the corresponding matrix entry. After that, the learner selects a prediction $p_t$ for that entry. Then $y_t$ is revealed and the learner suffers a loss $\ell(p_t,y_t)$. Hence, the goal of a learner is to minimize the regret with respect to a fixed class $\Wcal$ of prediction matrices,
$
    \sum_{t=1}^T \ell(p_t,y_t) - \inf_{W\in\Wcal} \sum_{t=1}^T \ell\bigl(W_{i_t,j_t},y_t\bigr)
$.
Following reality, we will assume that the adversary picks a different entry in each round.
When the learner's performance is measured by the regret after all $T=mn$ entries have been predicted,
the online collaborative filtering setting reduces to prediction with expert advice as discussed in Remark~\ref{rem:perm}.

As mentioned previously, $\Wcal$ is often taken to be a convex class of matrices with bounded trace-norm. Many convex learning problems, such as linear and kernel-based predictors, as well as matrix-based predictors, can be learned efficiently both in a stochastic and an online setting, using mirror descent or regularized follow-the-leader methods. However, for reasonable choices of $\Wcal$, a straightforward application of these techniques leads to algorithms with trivial bounds. In particular, in the case of $\Wcal$ consisting of $m\times n$ matrices with trace-norm at most $r$, standard online regret bounds would scale like $\Ocal\bigl(r\sqrt{T}\bigr)$. Since for this norm one typically has $r = \Ocal\bigl(\sqrt{mn}\bigr)$, we get a per-round regret guarantee of $\Ocal(\sqrt{mn/T})$. This is a trivial bound, since it becomes ``meaningful'' (smaller than a constant) only after all $T=mn$ entries have been predicted. In this section, we show how to obtain a computationally efficient algorithm for this problem, using the $R^2$ Forecaster. We note that following our work, other efficient algorithms were proposed in \cite{HazKaSha12,RaShaSri12}.

Consider first the transductive online setting, where the set of indices to be predicted is known in advance, and the adversary may only choose the order and values of the entries. It is readily seen that the $R^2$ Forecaster can be applied in this setting, using any convex class $\Wcal$ of fixed matrices with bounded entries to compete against, and any convex Lipschitz loss function. To do so, we let $\{i_k,j_k\}_{k=1}^{T}$ be the set of entries, and run the $R^2$ Forecaster with respect to $\Fcal=\{t\mapsto W_{i_t,j_t}~:~W\in \Wcal\}$, which corresponds to a class of experts as discussed in Remark~\ref{rem:perm}.

What is perhaps more surprising is that the $R^2$ Forecaster can also be applied in a \emph{non-transductive} setting, where the indices to be predicted are not known in advance. Moreover, the Forecaster doesn't need to know the horizon $T$ in advance. The key idea to achieve this is to utilize the non-asymptotic nature of the learning problem ---namely, that the game is played over a finite $m\times n$ matrix, so the time horizon is necessarily bounded.

The algorithm we propose is very simple: we apply the $R^2$ Forecaster as if we are in a setting with time horizon $T=mn$, which is played over \emph{all} entries of the $m\times n$ matrix. By Remark \ref{rem:perm}, the $R^2$ Forecaster does not need to know the order in which these $m\times n$ entries are going to be revealed. Whenever $\Wcal$ is convex and $\ell$ is a convex function, we can find an ERM in polynomial time by solving a convex problem. Hence, we can implement the $R^2$ Forecaster efficiently.

Using \lemref{lem:T-uniform}, the following theorem exemplifies how we can obtain a regret guarantee for our algorithm, in the case of $\Wcal$ consisting of the convex set of matrices with bounded trace-norm and bounded entries. For the sake of clarity, we will consider $n\times n$ square matrices.
\begin{theorem}\label{thm:trace}
Let $\ell$ be a loss function which satisfies the conditions of \lemref{lem:T-uniform}. Also, let $\Wcal$ consist of $n \times n$ matrices with trace-norm at most $r = \Ocal(n)$ and entries at most $b=\Ocal(1)$, suppose we apply the $R^2$ Forecaster over time horizon $n^2$ and all entries of the matrix. Then with probability at least $1-\delta$, after $T$ rounds, the algorithm achieves an average per-round regret of at most
\[
    \Ocal\left(\frac{n^{3/2}+n\sqrt{\ln(n/\delta)}}{T}\right)
\qquad
    \text{uniformly over $T=1,\dots,n^2$.}
\]
\end{theorem}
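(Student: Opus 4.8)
The plan is to combine three ingredients: the regret bound for the $R^2$ Forecaster (\thmref{thm:main}), the uniform-over-horizons lemma (\lemref{lem:T-uniform}), and a Rademacher complexity estimate for the class $\Wcal$ of trace-norm constrained matrices. First I would set up the reduction: as described in the text, running the $R^2$ Forecaster over time horizon $T' = n^2$ against the expert class $\Fcal = \{t\mapsto W_{i_t,j_t} : W\in\Wcal\}$ (in the permutation setting of Remark~\ref{rem:perm}) is exactly the online collaborative filtering game. By hypothesis $\ell$ satisfies the condition of \lemref{lem:T-uniform}, and $\Fcal \subseteq [-b,b]^{n^2}$ with $b = \Ocal(1)$, so \thmref{thm:main} applies with horizon $n^2$: with probability at least $1-\delta$, the cumulative regret after all $n^2$ rounds is at most $\rho\,\Rcal_{n^2}(\Fcal) + \Ocal\bigl(\rho b\sqrt{n^2\ln(n^2/\delta)}\bigr)$, where $\eta$ is a constant fraction so $\sqrt{1/\eta}+2 = \Ocal(1)$.

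The second step is to invoke \lemref{lem:T-uniform} with $G = \rho\,\Rcal_{n^2}(\Fcal) + \Ocal\bigl(\rho b\sqrt{n^2\ln(n^2/\delta)}\bigr)$. This upgrades the single horizon-$n^2$ bound to a simultaneous bound: with probability at least $1-\delta$, $\Vcal_t(R^2,\Fcal)\le G$ for every $t=1,\dots,n^2$. Dividing by $T$ then gives an average per-round regret of $G/T$ uniformly over $T=1,\dots,n^2$, which is the form of the theorem's conclusion.

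The third step — and the only place any real work is needed — is bounding $\Rcal_{n^2}(\Fcal)$. Here $\Rcal_{n^2}(\Fcal) = \E\,\sup_{W\in\Wcal}\sum_{t=1}^{n^2}\sigma_t W_{i_t,j_t}$ where the sum ranges over all $n^2$ entries, so this is $\E\,\sup_{\norm{W}_{\mathrm{tr}}\le r}\inner{\Sigma, W}$ with $\Sigma$ the $n\times n$ Rademacher sign matrix. By duality of the trace-norm with the spectral norm this equals $r\,\E\norm{\Sigma}_{\mathrm{spec}}$, and the standard bound on the spectral norm of a random sign matrix gives $\E\norm{\Sigma}_{\mathrm{spec}} = \Ocal(\sqrt{n})$. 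With $r = \Ocal(n)$ this yields $\Rcal_{n^2}(\Fcal) = \Ocal(n^{3/2})$. (The entrywise bound $b=\Ocal(1)$ only tightens this and is anyway needed for the $R^2$ Forecaster's input requirements; one can intersect the trace-norm ball with the $\ell_\infty$ ball without increasing the Rademacher complexity.) Plugging $\Rcal_{n^2}(\Fcal) = \Ocal(n^{3/2})$ and $b,\rho = \Ocal(1)$ into $G$, the $\sqrt{n^2\ln(n^2/\delta)} = \Ocal(n\sqrt{\ln(n/\delta)})$ term, so $G = \Ocal(n^{3/2} + n\sqrt{\ln(n/\delta)})$ and $G/T$ is exactly the claimed rate.

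I expect the only mild subtlety to be the Rademacher bound in step three: one must make sure that summing $\sigma_t W_{i_t,j_t}$ over the adversarially-ordered sequence of the $n^2$ entries really reconstructs the full inner product $\inner{\Sigma,W}$ — this is immediate because every entry appears exactly once and the $\sigma_t$ are i.i.d., so the ordering is irrelevant (this is precisely the content of Remark~\ref{rem:perm}). Everything else is bookkeeping: checking that $b,\rho$ are constants so they can be absorbed into the $\Ocal(\cdot)$, and noting that $\eta$ being a constant fraction keeps the algorithm polynomial-time via the convex ERM oracle.
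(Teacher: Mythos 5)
Your proposal is correct and follows the same route as the paper: \thmref{thm:main} at horizon $n^2$, then \lemref{lem:T-uniform} to make the bound uniform over $T=1,\dots,n^2$, then a Rademacher complexity estimate of $\Ocal(n^{3/2})$ together with monotonicity under $\Wcal\subseteq\Wcal'$. The only divergence is in the last ingredient: the paper simply cites Theorem 6 of \cite{ShalSham11} for $\Rcal_{n^2}(\Wcal')=\Ocal(n^{3/2})$, whereas you derive it directly from trace-norm/spectral-norm duality and $\E\norm{\Sigma}_{\mathrm{spec}}=\Ocal(\sqrt{n})$ for an $n\times n$ random sign matrix --- a self-contained computation that is valid here precisely because all $n^2$ entries appear exactly once, so $\sum_t\sigma_t W_{i_t,j_t}=\inner{\Sigma,W}$ irrespective of the adversary's ordering.
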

\begin{proof}
\smartqed
In our setting, where the adversary chooses a different entry at each round, \cite[Theorem 6]{ShalSham11} implies that for the class $\Wcal'$ of all matrices with trace-norm at most $r=\Ocal(n)$, it holds that $\Rcal_{T}(\Wcal')/T \leq \Ocal(n^{3/2}/T)$. Therefore, $\Rcal_{n^2}(\Wcal')\leq \Ocal(n^{3/2})$. Since $\Wcal\subseteq \Wcal'$, we get by definition of the Rademacher complexity that $\Rcal_{n^2}(\Wcal)=\Ocal(n^{3/2})$ as well. By \thmref{thm:main}, the regret after $n^2$ rounds is $\Ocal(n^{3/2}+n\sqrt{\ln(n/\delta)})$ with probability at least $1-\delta$. Applying \lemref{lem:T-uniform}, we get that the cumulative regret at the end of any round $T=1,\ldots,n^2$ is at most $\Ocal(n^{3/2}+n\sqrt{\ln(n/\delta)})$, as required.
\qed
\end{proof}
This bound becomes non-trivial after $n^{3/2}$ entries are revealed, which is still a vanishing proportion of all $n^2$ entries. While the regret might seem unusual compared to standard regret bounds (which usually have rates of $1/\sqrt{T}$ for general losses), it is a natural outcome of the non-asymptotic nature of our setting, where $T$ can never be larger than $n^2$. In fact, this is the same rate one would obtain in a batch setting, where the entries are drawn from an arbitrary distribution.

As mentioned in the introduction, other online learning algorithms for this problem have been published since this work appeared \cite{HazKaSha12,RaShaSri12}, using other techniques and assumptions.

\section{Appendix: Derivation of the Minimax Forecaster}\label{app:derivation}

In this appendix, we outline how the Minimax Forecaster is derived, as well as its associated guarantees. This outline closely follows the exposition in \cite[Chapter 8]{CesaBianchiLu06}, to which we refer the reader for some of the technical derivations.

First, we note that the Minimax Forecaster as presented in \cite{CesaBianchiLu06} actually refers to a slightly different setup than ours, where the outcome space is $\Ycal=\{0,1\}$ and the prediction space is $\Pcal= [0,1]$, rather than $\Ycal=\{-1,+1\}$ and $\Pcal=[-1,+1]$. We will first derive the forecaster for the first setting, and then show how to convert it to the second setting.

Our goal is to find a predictor which minimizes the worst-case regret,
\[
\max_{\by\in \{0,1\}^T} \left(L(\bp,\by)-\inf_{\boldf\in \Fcal}L(\boldf,\by)\right)
\]
where $\bp = (p_1,\dots,p_T)$ is the prediction sequence.

For convenience, in the following we sometimes use the notation $\by^{t}$ to denote a vector in $\{0,1\}^t$.
The idea of the derivation is to work backwards, starting with computing the optimal prediction at the last round $T$, then deriving the optimal prediction at round $T-1$ and so on. In the last round $T$, the first $T-1$ outcomes $\by^{T-1}$ have been revealed, and we want to find the optimal prediction $p_T$. Since our goal is to minimize worst-case regret with respect to the absolute loss, we just need to compute $p_T$ which minimizes
\[
L(\bp^{T-1},\by^{T-1})+\max\Bigl\{p_T-\inf_{\boldf\in \Fcal}L(\boldf,\by^{T-1}0)
~,~(1-p_T)-\inf_{\boldf\in \Fcal}L(\boldf,\by^{T-1}1)\Bigr\}~.
\]
In our setting, it is not hard to show that $\bigl|\inf_{\boldf\in \Fcal}L(\boldf,\by^{t-1}0) - \inf_{\boldf\in \Fcal}L(\boldf,\by^{t-1}1)\bigr| \leq 1$ (see \cite[Lemma 8.1]{CesaBianchiLu06}). Using this, we can compute the optimal $p_T$ to be
\begin{equation}\label{eq:plast}
p_T = \frac{1}{2}\Bigl(A_T(\by^{T-1}1)-A_T(\by^{T-1}0)+1\Bigr)
\end{equation}
where $A_{T}(\by^T) = -\inf_{\boldf\in \Fcal}L(\boldf,\by^T)$.

Having determined $p_T$, we can continue to the previous prediction $p_{T-1}$. This is equivalent to minimizing
\[
L(\bp^{T-2},\by^{T-2})+\max\Bigl\{p_{T-1}+A_{T-1}(\by^{T-2}0)~,~ (1-p_{T-1})+A_{T-1}(\by^{T-2}1)\Bigr\}
\]
where
\begin{equation}\label{eq:Adef}
A_{T-1}(\by^{T-1}) = \min_{p_T\in [0,1]}\max\left\{p_T-\inf_{\boldf\in \Fcal} L(\boldf,\by^{T-1}0)~,~ (1-p_T)-\inf_{\boldf\in \Fcal}L(\boldf,\by^{T-1}1)\right\}.
\end{equation}
Note that by plugging in the value of $p_T$ from \eqref{eq:plast}, we also get the following equivalent formulation for $A_{T-1}(\by^{T-1})$:
\[
A_{T-1}(\by^{T-1}) = \frac{1}{2}\Bigl(A_T(\by^{T-1}0)+A_T(\by^{T-1}1)+1\Bigr).
\]
Again, it is possible to show that the optimal value of $p_{T-1}$ is
\[
p_{T-1} = \frac{1}{2}\Bigl(A_{T-1}(\by^{T-2}1)-A_T(\by^{T-2}0)+1\Bigr).
\]
Repeating this procedure, one can show that at any round $t$, the minimax optimal prediction is
\begin{equation}\label{eq:ptopt}
p_{t} = \frac{1}{2}\Bigl(A_{t}(\by^{t-1}1)-A_t(\by^{t-1}0)+1\Bigr)
\end{equation}
where $A_{t}$ is defined recursively as $A_T(\by^T) = -\inf_{\boldf\in \Fcal} L(\boldf,\by^{T})$ and, for all $t$,
\begin{equation}\label{eq:Adefrecursive}
A_{t-1}(\by^{t-1}) = \frac{1}{2}\Bigl(A_t(\by^{t-1}0)+A_t(\by^{t-1}1)+1\Bigr).
\end{equation}
At first glance, computing $p_t$ from \eqref{eq:ptopt} might seem tricky, since it requires computing $A_{t}(\by^t)$ whose recursive expansion in \eqref{eq:Adefrecursive} involves exponentially many terms. Luckily, the recursive expansion has a simple structure, and it is not hard to show that
\begin{equation}\label{eq:Aexplicit}
A_{t}(\by^{t}) ~=~\frac{T-t}{2}-\frac{1}{2^T}\sum_{\by\in\{0,1\}^T}\left(\inf_{\boldf\in \Fcal} L(\boldf,\by^{t}Y^{T-t})\right)
~=~ \frac{T-t}{2}-\Exp\Bigl[\inf_{\boldf\in \Fcal} L(\boldf,\by^{t}Y^{T-t})\Bigr]
\end{equation}
where $Y^{T-t}$ is a sequence of $T-t$ i.i.d.\ Bernoulli random variables, which take values in $\{0,1\}$ with equal probability. Plugging this into the formula for the minimax prediction in \eqref{eq:ptopt}, we get that\footnote{This fact appears in an implicit form in \cite{CesabianchiFrHeHaScWa97} ---see also \cite[Exercise 8.4]{CesaBianchiLu06}.}
\begin{equation}\label{eq:p_t01}
p_t = \frac{1}{2}\left(\Exp\left[\inf_{\boldf\in \Fcal} L(\boldf,\by^{t-1}0 Y^{T-t})-\inf_{\boldf\in \Fcal} L(\boldf,\by^{t-1}1 Y^{T-t})\right]+1\right).
\end{equation}
This prediction rule constitutes the Minimax Forecaster as presented in \cite{CesaBianchiLu06}.

After deriving the algorithm, we turn to analyze its regret performance. To do so, we just need to note that $A_0$ equals the worst-case regret ---see the recursive definition at \eqref{eq:Adef}. Using the alternative explicit definition in \eqref{eq:Aexplicit}, we get that the worst-case regret equals
\begin{align*}
\frac{T}{2}-\Exp\left[\inf_{\boldf\in\Fcal}\sum_{t=1}^{T}|f_t-Y_t|\right]
= \Exp\left[\sup_{\boldf\in\Fcal}\sum_{t=1}^{T}\left(\frac{1}{2}-|f_t-Y_t|\right)
\right]
= \Exp\left[\sup_{\boldf\in\Fcal}\sum_{t=1}^{T}\left(f_t-\frac{1}{2}\right)\sigma_t
\right]
\end{align*}
where $\sigma_t$ are i.i.d.\ Rademacher random variables (taking values of $-1$ and $+1$ with equal probability). Recalling the definition of Rademacher complexity, \eqref{eq:rademachercomp}, we get that the regret is bounded by the Rademacher complexity of the shifted class, which is obtained from $\Fcal$ by taking every $\boldf\in \Fcal$ and replacing every coordinate $f_t$ by $f_t-1/2$.

Finally, it remains to show how to convert the forecaster and analysis above to the setting discussed in this paper, where the outcomes are in $\{-1,+1\}$ rather than $\{0,1\}$ and the predictions are in $[-1,+1]$ rather than $[0,1]$. To do so, consider a learning problem in this new setting, with some class $\Fcal$. For any vector $\by$, define $\widetilde{\by}$ to be the shifted vector $(\by+\mathbf{1})/2$, where $\mathbf{1} = (1,\dots,1)$ is the all-ones vector. Also, define $\widetilde{\Fcal}$ to be the shifted class $\widetilde{\Fcal} = \{(\boldf+\mathbf{1})/2~:~\boldf\in\Fcal\}$. It is easily seen that $L(\boldf,\by) = 2L(\widetilde{\boldf},\widetilde{\by})$ for any $\boldf,\by$. As a result, if we look at the prediction $p_t$ given by our forecaster in \eqref{eq:p_t}, then $\widetilde{p_t} = (p_t+1)/2$ is the minimax optimal prediction given by \eqref{eq:p_t01} with respect to the class $\widetilde{\Fcal}$ and the outcomes $\widetilde{\by}^T$. So our analysis above applies, and we get that
\begin{align*}
\max_{\by \in \{-1,+1\}^T} &\left(L(\bp,\by)-\inf_{\boldf\in \Fcal}L(\boldf,\by)\right)
~=~
\max_{\widetilde{\by} \in [0,1]^T} 2\left(L(\widetilde{\bp},\widetilde{\by})-\inf_{\widetilde{\boldf}\in \widetilde{\Fcal}}L(\widetilde{\boldf},\widetilde{\by})\right)
\\ &=~
2\Exp\left[\sup_{\widetilde{\boldf}\in\widetilde{\Fcal}}\sum_{t=1}^{T}\left(\widetilde{f}_t-\frac{1}{2}\right)\sigma_t \right]
~=~
\Exp\left[\sup_{\boldf\in\Fcal}\sum_{t=1}^{T}\sigma_t f_t \right]
\end{align*}
which is exactly the Rademacher complexity of the class $\Fcal$.

\textbf{Acknowledgements:} The first author acknowledges partial support by the PASCAL2 NoE under EC grant FP7-216886.

\bibliographystyle{plain}
\bibliography{mybib}

\begin{thebibliography}{10}

\bibitem{RaSriTe10}
K.~Sridharan A.~Rakhlin and A.~Tewari.
\newblock Online learning: Random averages, combinatorial parameters, and
  learnability.
\newblock In {\em NIPS}, 2010.

\bibitem{AbBaRaTe09}
J.~Abernethy, P.~Bartlett, A.~Rakhlin, and A.~Tewari.
\newblock Optimal strategies and minimax lower bounds for online convex games.
\newblock In {\em COLT}, 2009.

\bibitem{AbWa10}
J.~Abernethy and M.~Warmuth.
\newblock Repeated games against budgeted adversaries.
\newblock In {\em NIPS}, 2010.

\bibitem{Bach08}
F.~Bach.
\newblock Consistency of trace-norm minimization.
\newblock {\em Journal of Machine Learning Research}, 9:1019--1048, 2008.

\bibitem{BartlettMe01}
P.~Bartlett and S.~Mendelson.
\newblock Rademacher and {G}aussian complexities: {R}isk bounds and structural
  results.
\newblock In {\em COLT}, 2001.

\bibitem{BendKuMa97}
S.~Ben-David, E.~Kushilevitz, and Y.~Mansour.
\newblock Online learning versus offline learning.
\newblock {\em Machine Learning}, 29(1):45--63, 1997.

\bibitem{BenDavidPS09}
S.~Ben-David, D.~P{\'a}l, and S.~Shalev-Shwartz.
\newblock {Agnostic online learning}.
\newblock In {\em COLT}, 2009.

\bibitem{Blum94}
A.~Blum.
\newblock {Separating distribution-free and mistake-bound learning models over
  the boolean domain}.
\newblock {\em SIAM J. Comput.}, 23(5):990--1000, 1994.

\bibitem{CesCoGe04}
N.~Cesa-Bianchi, A.~Conconi, and C.~Gentile.
\newblock On the generalization ability of on-line learning algorithms.
\newblock {\em IEEE Transactions on Information Theory}, 50(9):2050--2057,
  2004.

\bibitem{CesabianchiFrHeHaScWa97}
N.~Cesa-Bianchi, Y.~Freund, D.~Haussler, D.~Helmbold, R.~Schapire, and
  M.~Warmuth.
\newblock How to use expert advice.
\newblock {\em Journal of the ACM}, 44(3):427--485, May 1997.

\bibitem{CesaBianchiLu06}
N.~Cesa-Bianchi and G.~Lugosi.
\newblock {\em Prediction, learning, and games}.
\newblock Cambridge University Press, 2006.

\bibitem{Chung94}
T.~Chung.
\newblock Approximate methods for sequential decision making using expert
  advice.
\newblock In {\em COLT}, 1994.

\bibitem{Dudley84}
R.~M. Dudley.
\newblock {\em A Course on Empirical Processes, {\'{E}}cole de
  Probabilit{\'{e}}s de St. Flour, 1982}, volume 1097 of {\em Lecture Notes in
  Mathematics}.
\newblock Springer Verlag, 1984.

\bibitem{FoSaShaSre11}
R.~Foygel, R.~Salakhutdinov, O.~Shamir, and N.~Srebro.
\newblock Learning with the weighted trace-norm under arbitrary sampling
  distributions.
\newblock In {\em NIPS}, 2011.

\bibitem{Hazan11}
E.~Hazan.
\newblock The convex optimization approach to regret minimization.
\newblock In S.~Nowozin S.~Sra and S.~Wright, editors, {\em Optimization for
  Machine Learning}. MIT Press, To Appear.

\bibitem{HazKaSha12}
E.~Hazan, S.~Kale, and S.~Shalev-Shwartz.
\newblock Near-optimal algorithms for online matrix prediction.
\newblock In {\em COLT}, 2012.

\bibitem{KakadeKalai05}
S.~Kakade and A.~Kalai.
\newblock From batch to transductive online learning.
\newblock In {\em NIPS}, 2005.

\bibitem{Koren09}
Y.~Koren.
\newblock Collaborative filtering with temporal dynamics.
\newblock In {\em KDD}, 2009.

\bibitem{LeeReSaSerTrp10}
J.~Lee, B.~Recht, R.~Salakhutdinov, N.~Srebro, and J.~Tropp.
\newblock Practical large-scale optimization for max-norm regularization.
\newblock In {\em NIPS}, 2010.

\bibitem{RaShaSri12}
A.~Rakhlin, O.~Shamir, and K.~Sridharan.
\newblock Relax and localize: From value to algorithms.
\newblock {\em CoRR}, abs/1204.0870, 2012.

\bibitem{SalaMni07}
R.~Salakhutdinov and A.~Mnih.
\newblock Probabilistic matrix factorization.
\newblock In {\em NIPS}, 2007.

\bibitem{SalSre10}
R.~Salakhutdinov and N.~Srebro.
\newblock Collaborative filtering in a non-uniform world: Learning with the
  weighted trace norm.
\newblock In {\em NIPS}, 2010.

\bibitem{SS12}
S.~Shalev-Shwartz.
\newblock Online learning and online convex optimization.
\newblock {\em Foundations and Trends in Machine Learning}, 4(2):107--194,
  2012.

\bibitem{ShalSham11}
O.~Shamir and S.~Shalev-Shwartz.
\newblock Collaborative filtering with the trace norm: Learning, bounding, and
  transducing.
\newblock In {\em COLT}, 2011.

\bibitem{SrebRenJaa04}
N.~Srebro, J.~Rennie, and T.~Jaakkola.
\newblock Maximum-margin matrix factorization.
\newblock In {\em NIPS}, 2004.

\bibitem{SreSh05}
N.~Srebro and A.~Shraibman.
\newblock Rank, trace-norm and max-norm.
\newblock In {\em COLT}, 2005.

\bibitem{Vap98}
V.~Vapnik.
\newblock {\em Statistical learning theory}.
\newblock Wiley, 1998.

\end{thebibliography}

\end{document}